\newcommand{\RR}{\mathbb{R}}
\newcommand{\NN}{\mathbb{N}}
\theoremstyle{plain}
\newtheorem{theorem}{Theorem}[section]
\newtheorem*{theorem*}{Theorem}
\newtheorem{lemma*}{Lemma}
\theoremstyle{definition}
\newtheorem{definition}[theorem]{Definition}
\newtheorem*{claim*}{Claim}
\theoremstyle{remark}
\newtheorem{remark}[theorem]{Remark}
\DeclareMathOperator{\Ima}{Im}
\DeclareMathOperator{\rank}{rank}
\newcommand{\lms}{ \{\!\!\{ }
\newcommand{\rms}{ \}\!\!\} }
\newcommand{\m}{ \mid }
\newcommand{\lb}{ \!\left( }
\newcommand{\rb}{ \!\right) }
\newcommand{\I}{\mathbf{i}}
\newcommand{\Rthree}{\RR^{3 \times n}}
\newcommand{\sort}{\mathrm{sort}}
\newcommand{\embed}{\text{\textbf{Embed}}}
\newcommand{\SES}{\mathcal{SO}[d,n]}
\newcommand{\SESthree}{\mathcal{SO}[3,n]}
\newcommand{\ES}{\mathcal{O}[d,n]}
\newcommand{\ESthree}{\mathcal{O}[3,n]}
\newcommand{\X}{\mathcal{X}}
\newcommand{\Y}{\mathcal{Y}}
\newcommand{\SO}{\mathcal{SO}}
\renewcommand{\O}{\mathcal{O}}
\newcommand{\G}{\mathcal{G}}
\newcommand{\N}{\mathcal{N}}
\newcommand{\Xdistinct}{\RR^{3\times n}_{distinct}}
\title{Complete Neural Networks for Complete Euclidean Graphs}
\author {
    Snir Hordan\textsuperscript{\rm 1},
    Tal Amir\textsuperscript{\rm 1},
    Steven J. Gortler\textsuperscript{\rm 2},
    Nadav Dym\textsuperscript{\rm 1, \rm 3}
}
\begin{document}

\maketitle

\begin{abstract}
Neural networks for point clouds, which respect their natural invariance to permutation and rigid motion,  have enjoyed recent success in modeling geometric phenomena, from molecular dynamics to recommender systems.  Yet, to date, no model with polynomial complexity is known to be \emph{complete}, that is, able to distinguish between any pair of non-isomorphic point clouds. We fill this theoretical gap by showing that point clouds can be completely determined, up to permutation and rigid motion, by applying the 3-WL graph isomorphism test to the point cloud's centralized Gram matrix.  Moreover, we formulate an Euclidean variant of the 2-WL test and show that it is also sufficient to achieve completeness. We then show how our complete Euclidean WL tests can be simulated by an Euclidean graph neural network of moderate size and demonstrate their separation capability on highly symmetrical point clouds.

\end{abstract}

\section{Introduction}
A point cloud is a collection of $n$ points in $\RR^d$, where typically in applications $d=3$. Machine learning on point clouds is an important task with applications in chemistry \cite{Gilmer2017, wang2022comenet}, physical systems \cite{finzi2021practical}, and image processing \cite{ma2023image}. Many successful architectures for point clouds are invariant by construction to the natural symmetries of point clouds: permutations and rigid motions.

The rapidly increasing literature on point-cloud networks with permutation and rigid-motion symmetries has motivated research aimed at theoretically understanding the expressive power of the various architectures. This analysis typically focuses on two closely related concepts: \emph{Separation} and \emph{Universality}. We say an invariant architecture is \emph{separating}, or \emph{complete}, if it can assign distinct values to any pair of point clouds that are not related by symmetry. An invariant architecture is \emph{universal} if it can approximate all continuous invariant functions on compact sets. Generally speaking, these two concepts are essentially equivalent, as discussed in  \cite{villar,joshi2022expressive,chen2019equivalence}, and in our context, in Appendix A.

 \citet{Dym2020OnTU} proved that the well-known
  Tensor Field Network \cite{thomas2018tensor} invariant architecture is universal, but the construction in their proof requires arbitrarily high-order representations of the rotation group. Similarly, universality can be obtained using high-order representations of the permutation group \cite{lim2022sign}. However, before this work, it was not known whether the same theoretical guarantees can be achieved by
 realistic point-cloud architectures that use low-dimensional representations, and whose complexity has a mild polynomial dependency on the data dimension. In the words of \cite{pozdnyakov2022incompleteness}: \emph{"...provably universal equivariant frameworks are such in the limit in which they generate high-order correlations… It is an interesting, and open, question whether a given order suffices to guarantee complete resolving power."} (p. 6). We note that it is known that separation of point clouds in polynomial time in $n$ is possible, assuming that $d$ is fixed (e.g., $d=3$) \cite{arvind2016parameterized,dym2019linearly,kurlin2024polynomial}. What still remains to be established is whether separation is achievable for common invariant machine learning models, and more generally,  whether separation can be achieved by computing a continuous invariant feature that is piecewise differentiable. 
 
 In this paper, we give what seems to be the first positive answer to this question. We focus on analyzing a popular method for the construction of invariant point-cloud networks via \emph{Graph Neural Networks (GNNs)}. This is done in two steps:  first, point clouds are represented as a \emph{Euclidean graph}-  which we define to be a complete weighted graph whose edge features are simple, rotation-invariant features: the inner products between pairs of (centralized) points. We then apply permutation-invariant \emph{Graph Neural Networks (GNNs)} to the Euclidean graphs to obtain a rotation- and permutation-invariant global point-cloud feature. This leads to a rich family of invariant point-cloud architectures, which is determined by the type of GNN chosen.

The most straightforward implementation of this idea would be to apply the popular message passing GNNs to the Euclidean graphs. One could also consider applying more expressive GNNs. For combinatorial graphs, it is known that message-passing GNNs are only as expressive as the 1-WL graph isomorphism test. There exists a hierarchy of $k$-WL graph isomorphism tests, where larger values of $k$ correspond to more expressive, and more expensive, graph isomorphism tests.  There are also corresponding GNNs that simulate the $k$-WL tests and have an equivalent separation power  \cite{morris2018weisfeiler,prov}. One could then consider applying these more expressive architectures to Euclidean graphs, as suggested by \citet{lim2022sign}. Accordingly, we aim to answer the following questions:

\textbf{Question 1} For which $k$ is the $k$-WL test, when applied to Euclidean graphs, complete?

\textbf{Question 2}
Can this test be implemented in polynomial time by a continuous, piecewise-differentiable architecture?

We begin by addressing Question 1. 
First, we consider a variation of the WL-test adapted for point clouds, which we refer to as \emph{$1$-EWL} (`E' for Euclidean). This test was first proposed by \citet{pozdnyakov2022incompleteness}, where it was shown that it cannot distinguish between all $3$-dimensional point clouds, and consequently, neither can  GNNs like SchNet \cite{schnet}, which can be shown to simulate it. Our first result balances this by showing that two iterations of $1$-EWL are enough to separate \emph{almost any} pair of point clouds.

To achieve complete separation for \emph{all} point clouds, we consider higher-order $k$-EWL tests. We first consider a natural adaptation of $k$-WL for Euclidean graphs, which we name the \emph{Vanilla-$k$-EWL} test. In this test, the standard $k$-WL is applied to the Euclidean graph induced by the point clouds. We show that when $k=3$, this test is complete for $3$-dimensional point clouds. Additionally, we propose a variant of the Vanilla $2$-EWL, which incorporates additional geometric information while having the same complexity. We call this test the \emph{$2$-EWL} test and show that it is complete on 3D point clouds. We also propose a natural variation of $2$-EWL called \emph{$2$-SEWL}, which can distinguish between point clouds that are related by a reflection. This ability is important for chemical applications, as most biological molecules that are related by a reflection are \textit{not} chemically identical \cite{kapon2021evidence} (this molecular property is called \emph{chirality}).

 We next address the second question of how to construct a GNN for Euclidean data with the same separation power as that of the various $k$-EWL tests we describe. For combinatorial graphs, such equivalence results rely on injective functions defined on multisets of discrete features \cite{xu2018powerful}. For Euclidean graphs, one can similarly rely on injective functions for multisets with continuous features, such as those proposed in \cite{dym2023low}. However, a naive application of this approach leads to a very large number of hidden features, which grows exponentially with the number of message-passing iterations (see Figure~\ref{fig:dim}). We show how this problem can be remedied, 
so that the number of features needed depends only linearly on the number of message-passing iterations.

To summarize, our main results in this paper are:
\begin{enumerate}
    \item We show that two iterations of $1$-EWL can separate \emph{almost all} point clouds in any dimension.
    \item We prove the completeness of a single iteration of the vanilla $3$-EWL for point clouds in $\mathbb{R}^3$.
    \item We formulate the $2$-SEWL and $2$-EWL tests, and prove their completeness for point clouds in $\RR^3$.
    \item We explain how to build differentiable architectures for point clouds with the same separation power as Euclidean $k$-WL tests, with reasonable complexity.  
\end{enumerate}

\paragraph{Experiments} We present synthetic experiments that demonstrate that $2$-SEWL can separate challenging point-cloud pairs that cannot be separated by several popular architectures. 

\paragraph{Disambiguation: Euclidean Graphs} In this paper we use a simple definition of a Euclidean graph as the centralized Gram matrix of a point cloud (centralizing the point cloud and then calculating its Gram matrix) and focus on a fundamental theoretical question related to this representation. In the learning literature, terms like `geometric graphs' (not used here) could refer to graphs that have both geometric and non-geometric edge and vertex features, or graphs where pairwise distances are only available for specific point pairs (edges in an
incomplete graph).

\subsection{Related Work}
\label{rlt_wrk}

\paragraph{Euclidean WL}  \citet{pozdnyakov2022incompleteness} showed that $1$-EWL is incomplete for 3-dimensional point clouds. \citet{joshi2022expressive} define separation as a more general definition of geometric graph, which combines geometric and combinatorial features. This work holds various interesting insights for this more general problem but they do not prove completeness as we do here.

\paragraph{Other complete constructions}
As mentioned earlier, \citet{Dym2020OnTU} proved universality with respect to permutations and rigid motions for architectures using high-dimensional representations of the rotation group. Similar results were obtained in \cite{finkelshtein2022simple,gemnet}. In \cite{lim2022sign}, universality was proven for Euclidean GNNs with very high-order permutation representations. In the planar case $d=2$, universality using low-dimensional features was achieved in \cite{bokman2022zz}. For $d\geq 3$ our construction seems to be the first to achieve universality using low dimensional representations.

For general fixed $d$, there do exist continuous algorithms that can separate point clouds up to equivalence in polynomial time, but they do not seem to lend themselves directly to neural architectures. In \cite{kurlin2024polynomial,widdowson2023recognizing} a  polynomial-time algorithm is introduced for computing invariants that completely determine a point cloud that is Lipschitz-continuous, yet these invariants are represented as a `multi-set of multi-sets' and thus do not allow for gradient-descent-based optimization. Our approach is continuous and represents the point cloud as a vector in real space, which allows for back-propagation.
Efficient tests for equivalence of Euclidean graphs were described by \citet{brass2000testing,arvind2016parameterized}, but they compute features that do not depend continuously on the point cloud. 

\paragraph{Weaker notions of universality}  \citet{widdowson2022resolving} suggest a method for distinguishing almost every point clouds up to equivalence, similar to our result here on $1$-EWL.  Similarly, efficient separation/universality can also be obtained for point clouds with distinct principal axes \cite{puny2021frame,kurlin2024polynomial}. Another setting in which universality is easier to obtain is when only rigid symmetries are considered and permutation symmetries are ignored  \cite{wang2022comenet,villar,egnn}. All these results do not provide universality for \emph{all} point clouds, with respect to the joint action of permutations and rigid motions.
\subsection*{Mathematical Notation}
A (finite) \emph{multiset} $\lms y_1,\ldots,y_N \rms $ is an unordered collection of elements where repetitions are allowed.

Let $\G$ be a group acting on a set $\X$. For $X,Y \in \X$,  we say that $X\underset{\G}{=}Y$ if $Y=gX$ for some $g \in \G$. We say that a function $f:\X \to \Y$  is \emph{invariant} if $f(gx)=f(x)$ for all $x\in X, g\in G$. We say that $f$ is \emph{equivariant} if $\Y$ is also endowed with some action of $G$ and  $f(gx)=gf(x) $ for all $x\in \X,g\in \G$. 
A separating invariant mapping is an invariant mapping that is injective, up to group equivalence:
\begin{definition}[Separating Invariant]\label{def:sep}
Let $\G$ be a group acting on a set $\X$. We say $F : \X\to \mathbb{R}^K$ is a \emph{$\G$-separating invariant} with \emph{embedding dimension} $K$ if for all $X,Y \in \X$, $F(X)=F(Y) \Leftrightarrow X\underset{\G}{=}Y$.
\end{definition}
We focus on the case where $\X$ is some Euclidean domain. To enable gradient-based learning, we shall need separating mappings that are continuous everywhere and differentiable almost everywhere.

The symmetry group we consider for point clouds $(x_1,\ldots,x_n)\in \RR^{d\times n} $ is generated by a rotation matrix $R\in \SO(d)$, and a permutation $\sigma \in S_n$. These act on a point cloud by 
$$(R,\sigma)_*(x_1,\ldots,x_n)=(Rx_{\sigma^{-1}(1)},\ldots,Rx_{\sigma^{-1}(n)}) .$$
We denote this group by $\SES$.  In some instances, reflections $R \in \O(d) $ are also permitted, leading to a slightly larger symmetry group, which we denote by $\ES$. Our goal shall be to construct separating invariants for these groups. For the sake of brevity, we do not discuss \emph{translation} invariance and separation, as these can easily be achieved by centering the input point clouds, once $\SES$ (or $\ES$) separating invariants are constructed, see \cite{dym2023low}. 

For simplicity of notation, throughout this paper, we focus on the case  $d=3$. In Appendix A, we explain how our constructions and theorems can be generalized to $d > 3$.
\begin{figure*}[t]
    \centering
    \includegraphics[width=0.75\textwidth]{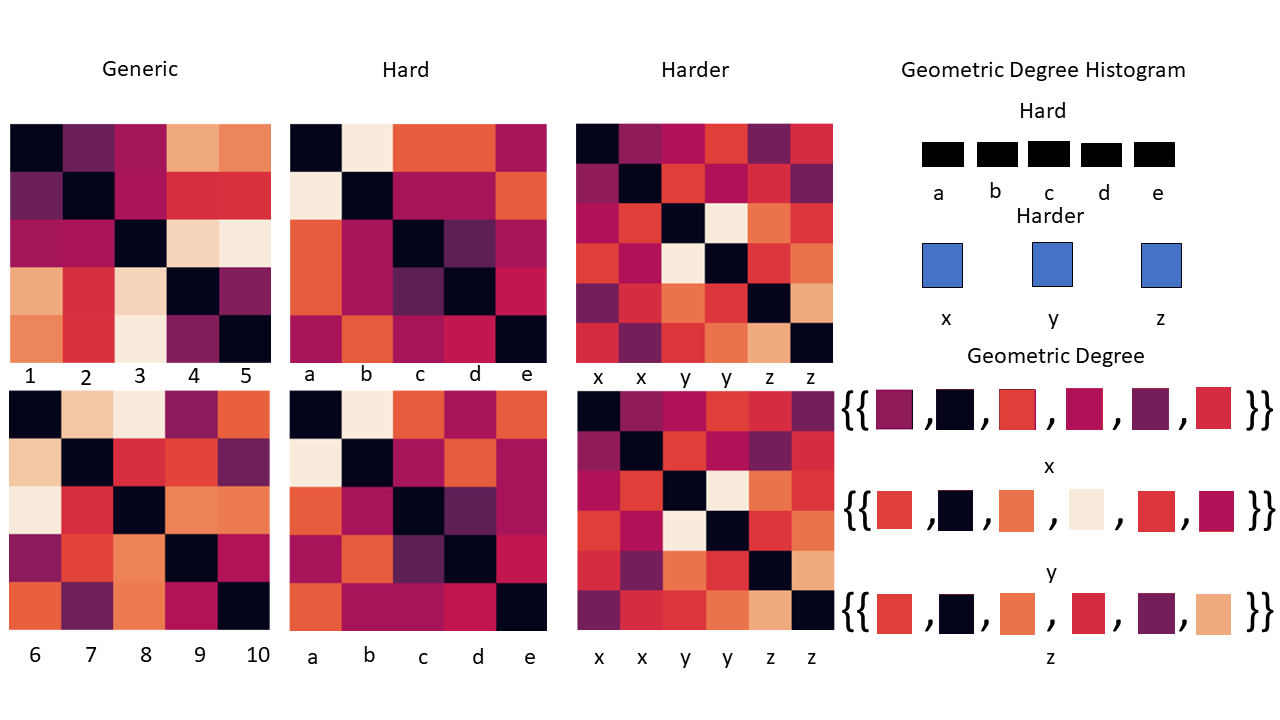}
    \caption{Distance matrices (Left), geometric degree histogram (Right) of pairs of point clouds. The \textit{generic} pair is a randomly sampled pair of point clouds. Notice each of the nodes in each of the clouds has a distinct geometric degree. The \textit{Hard}  pair exhibits a distinct geometric degree for each node, but only within each point cloud, 
 that is the pair shares an identical geometric degree histogram. The \textit{Harder} example is a pair of point clouds with identical geometric degree histogram, and each point cloud is comprised of three pairs of points, with each pair having an identical geometric degree. Examples from \cite{pozdnyakov2022incompleteness} and \cite{pozdnyakov2020incompleteness}.  } 
    \label{fig:degree}
\end{figure*}

\section{Euclidean Graph Isomorphism Tests}\label{sec:tests}
The $k$-WL Graph Isomorphism Test  is a classical paradigm for testing the isomorphism of combinatorial graphs \cite{weisfeiler1968reduction}, which we shall now briefly describe. Let $\G$ be a graph with vertices indexed by $[n] = \{1,2,\ldots,n\}$. We denote each ordered $k$-tuple of vertices by a multi-index $\mathbf{i} = \left( i_1,\ldots,i_k \right) \in [n]^k$. Essentially, for each such $k$-tuple $\mathbf{i}$, the test maintains a \emph{coloring} $\mathbf{C}(\mathbf{i})$ that belongs to a discrete set, and updates it iteratively. First, the coloring of each $k$-tuple is assigned an initial value that encodes the \emph{isomorphism type} of the corresponding $k$-dimensional subgraph: 
\begin{equation}
\mathbf{C_{(0)}}(\mathbf{i}),\: \mathbf{i} \in [n]^k
\end{equation}
Then the color of each $k$-tuple $\mathbf{i}$ is iteratively refined according to the colors of its `neighboring' $k$-tuples. The update rule is defined as
\begin{equation}\label{eq:mpnn}
\mathbf{C_{(t+1)}}(\mathbf{i}) = \embed^{(t+1)}(\mathbf{C_{(t)}}(\mathbf{i}), \lms  N_{j}(\mathbf{C_{(\mathbf{t})}}(\mathbf{i}))  \rms_{j=1}^{n}  )
\end{equation}
where
\begin{equation}
    N_{j}(\mathbf{C_{(t)}}(\mathbf{i})) = \left(\mathbf{C_{(\mathbf{t})}}(\mathbf{i}[j \setminus 1]), \ldots, \mathbf{C_{(\mathbf{t})}}(\mathbf{i}[j \setminus k]) \right)
\end{equation}
where $\mathbf{i}[j \setminus t]$ is the multi-index $\mathbf{i}$ with its $t$-th coordinate replaced by $j$; e.g. for $t=1$, $\mathbf{i}[j\setminus 1] = (j,i_2,\ldots, i_k)$.
$\embed$ is a function that maps its input injectively to some discrete set. This process is repeated $T$ times to obtain a final coloring $ \lms \mathbf{C_{(T)}}(\mathbf{i}) \rms_{\mathbf{i} \in {\left[n\right]^k}}$. A global label is then calculated by
$$\mathbf{C}_{\G}=\embed^{(T+1)} \left( \lms \mathbf{C_{(T)}}(\mathbf{i}) \; | \; \mathbf{i} \in [n]^k \rms \right),$$
where $\embed^{(T+1)}$ is a function that maps label-multisets injectively to some discrete set.

To test whether two graphs $\G$ and $\G'$ are isomorphic, the $k$-WL test computes the corresponding colorings $ \mathbf{C}_{\G}$ and $ \mathbf{C}_{\G'}$ for some chosen $T$. If $\mathbf{C}_{\G} \neq \mathbf{C}_{\G'}$ then $\G$ and $\G'$ are guaranteed not to be isomorphic, whereas if $\mathbf{C}_{\G} = \mathbf{C}_{\G'}$, then $\G$ and $\G'$ may either be isomorphic or not, and the test does not, in general, provide a decisive answer for combinatorial graphs. It is known that this test can distinguish a strictly larger class of combinatorial graphs for every strict increase in the value of k, i.e. it is a strict hierarchy of tests in terms of distinguishing power \cite{CaiFuererImmerman1992, grohe2017descriptive}. We note that in the literature some may refer to the above test as  $k$-\textit{Folklore}-WL, e.g. \cite{morris2018weisfeiler}.

\paragraph{Vanilla-$k$-WL tests} As a first step from a combinatorial to a Euclidean setting, 
we identify each point cloud $X=(x_1,\ldots,x_n)\in \RR^{d \times n} $ with a complete graph on $n$ vertices, wherein each edge $\left(i,j\right)$ is endowed with the weight $w_{ij}(X)=\langle x_i,x_j \rangle$. We name such a graph a \textit{Euclidean graph}. Similarly to $k$-WL for combinatorial graphs, $k$-WL for Euclidean graphs maintains a coloring of the $k$-tuples of vertices. However, the initial color of each $k$-tuple  $\mathbf{i}$ is not a discrete label as in the combinatorial case, but rather a  $k\times k$ matrix of continuous features, which represent all edge weights $w_{ij} $ corresponding to pairs of indices from $\mathbf{i}$. We will call the $k$-WL test defined by this initial coloring the \textbf{vanilla $k$-WL} test. This test is invariant by construction to reflections, rotations, and permutations. We note that our definition of the vanilla $k$-EWL test via inner products follows that of \cite{lim2022sign}. Another popular, and essentially equivalent, formulation, uses distances instead.   

\paragraph{$k$-EWL tests} An inherent limitation of the Vanilla-1-EWL test is that no pairwise Euclidean information is passed, yielding it rather uninformative. Indeed,  \cite{pozdnyakov2022incompleteness} proposed a Euclidean analog of the $1$-WL test, where the update rule $\mathbf{C}_{(\mathbf{t+1})}(i)$
 \eqref{eq:mpnn} is replaced with 
\begin{equation}\label{eq:1wl}
\embed^{(\mathbf{t})} \left( \mathbf{C}_{(\mathbf{t})}(i), \lms \left( \mathbf{C}_{(\mathbf{t})}(j), \|x_i-x_j\| \right),  j\neq i \rms \right)\end{equation}
 We call this test the \textbf{$1$-EWL} test. This formulation is motivated by the fact that many symmetry-preserving networks for point clouds are in fact a realization of it, though they use $\embed$ functions that are continuous and, in general, may assign the same value to different multisets. Consequently, the separation power of these architectures is at most that of $1$-EWL with discrete injective hash functions. Moreover, the separation power will be equivalent if continuous injective multiset functions are used for embedding, as we discuss in the proceeding sections.

The $1$-EWL test strengthens the Vanilla-$1$-EWL test
by allowing the messages passed to a node in each step to contain not only previous colorings but also geometric information in the form of pairwise distances. More generally, we shall use the term \textbf{$k$-EWL} to refer to tests that follow the  Euclidean $k$-WL paradigm, but incorporate geometric invariants into the message-passing procedure. In particular, for point clouds with dimension $3$, we define the $2$-SEWL test (`SE' for \emph{Special Euclidean}) by replacing the update step of \eqref{eq:mpnn}  with


\begin{equation}\label{eq:ewl}
\embed^{(t)} ( \mathbf{C_{(\mathbf{t})}}(i,j), \lms  N^{S}_{k}(\mathbf{C}_{(\mathbf{t})}(i,j))  \rms_{k=1}^{n})     
\end{equation}

where

$$N^{S}_{k}(\mathbf{C}_{(\mathbf{t})}(i,j)) := \left(\mathbf{C}_{(\mathbf{t})}(k,j), \mathbf{C}_{(\mathbf{t})}(i,k)  ,  \langle x_i\times x_j, x_k \rangle \right) $$

 Note that $\langle x_i \times x_j, x_k \rangle$ is equal to the determinant of the $3 \times 3$ matrix whose rows are the three vectors $ x_i, x_j, x_k $, which makes this a natural choice as all polynomial invariants of $\SO(3) $ are generated by these determinants and the inner products we use for the initial coloring \cite{kraft1996classical}.

We note that, Using the fact that $O(3)$ is just two copies of $SO(3)$, it is not difficult to generalize $2$-SEWL to a complete $\ESthree$ test, which we name 2-EWL. for general $d$,  similar complete $(d-1)$-SEWL and $(d-1)$-EWL tests can be formulated for point clouds in $\RR^{d}$ via the Hodge-star operator; see Appendix A for further details.

In the rest of this section, we shall prove that the $2$-SEWL,  and vanilla $3$-EWL tests are complete when applied to $\RR^{3 \times n}$, even when using a single iteration $(T=1)$. We shall also show that two iterations of the $1$-EWL test are complete, except on a set of measure zero.



\subsection{Generic Completeness of 1-EWL}
\label{sec_generic_completeness}
The separation power of 1-EWL is closely linked to the notion of \emph{geometric degree}: For a  point cloud $X=(x_1,\ldots,x_n)$, we define the geometric degree $d(i,X) $ of the $i$-th point, and the induced geometric degree histogram $d_H(X)$, to be the multisets 
$$d(i,X)=\lms \|x_1-x_i\|,\ldots,\|x_n-x_i\| \rms,$$ and $$ \quad d_H(X)=\lms d(1,X),\ldots,d(n,X) \rms.$$

It is not difficult to see that if $d_H(X)\neq d_H(Y)$ then $X$ and $Y$ can be separated by a single $1$-EWL iteration. An example of such a pair is shown on the left of Figure~\ref{fig:degree}. With two $1$-EWL iterations, we show that  can separate $X$ and $Y$ even if $d_H(X) = d_H(Y)$, provided that they both belong to the set of point clouds defined by
$$
\Xdistinct :=\{X \in \RR^{3\times n}\;|\; d(i,X) \neq d(j,X) \;,\;  i\neq j  \}.
$$
Such an example, taken from \cite{pozdnyakov2020incompleteness}, is visualized in the middle column of Figure~\ref{fig:degree}. 
\begin{restatable}{theorem}{onewl}\label{thm:1wl}
 Two iterations of the $1$-EWL test assign two point clouds $\X,Y \in  \Xdistinct$ the same value, if and only if 
 $X\underset{\ESthree}{=}Y $.
\end{restatable}

In the appendix, we show that the complement of $\Xdistinct$ has measure zero. Thus this result complements long-standing results for combinatorial graphs, stating that 1-WL can classify almost all such graphs as the number of nodes tends to infinity \cite{babai1980random}.

The right-most pair of point clouds (`Harder') in Figure~\ref{fig:degree} is taken from \cite{pozdnyakov2022incompleteness}. The degree histograms of these point clouds are identical, and they are not in $\Xdistinct$.  \cite{pozdnyakov2022incompleteness} show that this pair cannot be separated by any number of $1$-EWL iterations.  
\subsection{Is 1-EWL All You Need?}\label{sub:is}
\begin{figure}[t]
\includegraphics[width=5cm]{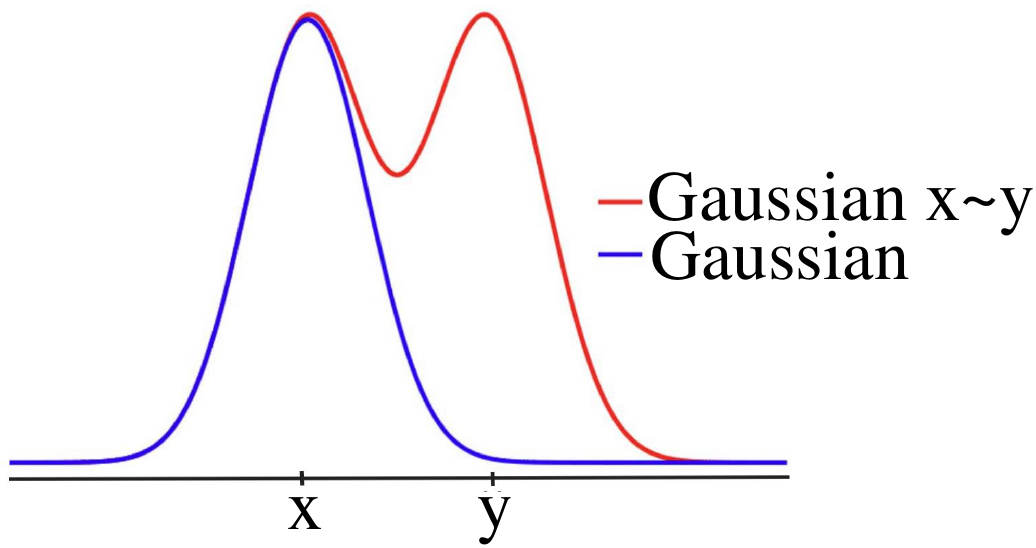}
\centering
\caption{A plot of a Gaussian distribution centered at $x\in \RR$, depicting a target function is shown in blue. In red, a schematic plot of how a Lipschitz continuous function that does not distinguish $x$ from $y$ would model the target function. }
\centering
\end{figure}
Theorem \ref{thm:1wl} shows that the probability of a failure of the $1$-EWL is zero. A natural question to ask is whether more powerful tests are needed. We believe the answer to this question is yes. 

Typical hypothesis classes used for machine learning, such as neural networks, are Lipschitz continuous \cite{Gama_2020}.  In this setting, failure to separate on a measure zero set could have implications for non-trivial positive measure, see Figure 2.


\section{2-SEWL and Vanilla 3-EWL are Complete}
We now prove that the vanilla $3$-EWL test is complete.
\begin{restatable}{theorem}{wl}\label{thm:3wl}
For every $X,Y\in \RR^{3\times n}$, a single iteration of the vanilla  $3$-EWL test assigns $X$ and $Y$ the same value if and only if $X\underset{\ESthree}{=}Y $.  
\end{restatable}
\begin{proof}
First, it is clear that if $X\underset{\ESthree}{=}Y $ then $\mathbf{C_\G}(X)=\mathbf{C_\G}(Y) $ since the vanilla $3$-EWL test is invariant by construction. The challenge is proving the other direction. To this end, let us assume that $\mathbf{C_\G}(X)=\mathbf{C_\G}(Y) $, and assume without loss of generality that $r:=\mathrm{rank}(X)\geq \mathrm{rank}(Y) $. Note that $X$ has rank $r\leq 3$, and so it must contain some three points whose rank is also $r$. By applying a permutation to $X$ we can assume without loss of generality that these three points are the first three points. The initial coloring  $ \mathbf{C_0}(1,2,3)(X)$ of this triplet is their Gram matrix $(\langle x_i,x_j \rangle)_{1\leq i,j \leq 3} $, which has the same rank $r$ as the space spanned by the three points. Next, since  $\mathbf{C_\G}(X)=\mathbf{C_\G}(Y) $ are the same, there exists a triplet of points $i,j,k $ such that $\mathbf{C}_{(1)}(1,2,3)(X)=\mathbf{C}_{(1)}(i,j,k)(Y)$ which implies that the initial colorings are also the same. By applying a permutation to $Y$ we can assume without loss of generality that $i=1,j=2,k=3$. Next, since the Gram matrix of $x_1,x_2,x_3$ and $y_1,y_2,y_3$ are identical, there is an orthogonal transformation that takes $x_i $ to $y_i$ for $i=1,2,3$, and by applying this transformation to all points in $X$ we can assume without loss of generality that $x_i=y_i$ for $i=1,2,3$. It remains to show that the rest of the points of $X$ and $Y$ are equal, up to permutation.  
To see this, first note that $X$ and $Y$ have the same rank since 
$$r=\rank(X)\geq \rank(Y)\geq \rank(y_1,y_2,y_3)$$ $$ =\rank(x_1,x_2,x_3)=r .$$
Thus the space spanned by $x_1=y_1,x_2=y_2,x_3=y_3$ contains all points in $X$ and $Y$. Next, we can deduce from the aggregation rule defining $\mathbf{C_1}(1,2,3)(X)$ in \eqref{eq:mpnn}, that
$$\lms \ (\langle x_j,x_l \rangle)_{l=1}^{3} | \, j \in [n] \rms= \lms ( \langle y_j,y_l \rangle)_{l=1}^{3} | \, j \in [n]  \rms . $$
Since all points in $X$ and $Y$ belong to the span of $x_1=y_1,x_2=y_2,x_3=y_3$, $X$ and $Y$ are the same up to permutation of the last $n-3$ coordinates. This concludes the proof of the theorem.
\end{proof}

We next outline the completeness proof of the more efficient $2$-SEWL.
\begin{restatable}{theorem}{sewl}\label{thm:2sewl}
For every $X,Y\in \RR^{3\times n}$, a single iteration of the $2$-SEWL test assigns $X$ and $Y$ the same value if and only if $X\underset{\SESthree}{=}Y $.  
\end{restatable}
\begin{proof}[Proof idea] The completeness of Vanilla-$3$-EWL was based on the fact that its initial coloring captures the Gram matrix of triplets of vectors that span the space spanned by $X$, and on the availability of projections onto this basis in the aggregation step defined in \eqref{eq:mpnn}. Our proof for $2$-SEWL completeness relies on the fact that a pair of non-degenerate vectors $x_i,x_j $ induces a basis $x_i,x_j,x_i \times x_j$ of $\RR^3$. The Gram matrix of this basis can be recovered from the Gram matrix of the first two points $x_i,x_j $, and the projection onto this basis can be obtained from the extra geometric information we added in \eqref{eq:ewl}. A full proof is given in the appendix.    
\end{proof}
To conclude this section, 
we note that the above theorem can be readily used to show that the $2$-EWL test is also complete with respect to $\ESthree$. For details see Appendix A.

\section{EWL-Equivalent GNNs}\label{sec:gnnslowdim}
In the previous section, we discussed the generic completeness of 1-EWL and the completeness of $2$-SEWL and vanilla $3$-EWL. The $\embed$ functions in these tests are hash functions, which can be redefined independently for each pair of point clouds $X, Y$. In this section, our goal is to explain how to construct GNNs with equivalent separation power to that of these tests, while choosing continuous, piecewise differentiable $\embed$ functions that are injective. While this question is well studied for combinatorial graphs with discrete features \cite{xu2018powerful,morris2018weisfeiler,prov,aamand2022exponentially}, here we focus on addressing it for Euclidean graphs with continuous features.

\subsection{Multiset Injective Functions for Continuous Features}\label{subsec:msinjfun}
\begin{figure}[t] 
    \centering\includegraphics[width=0.43\textwidth]{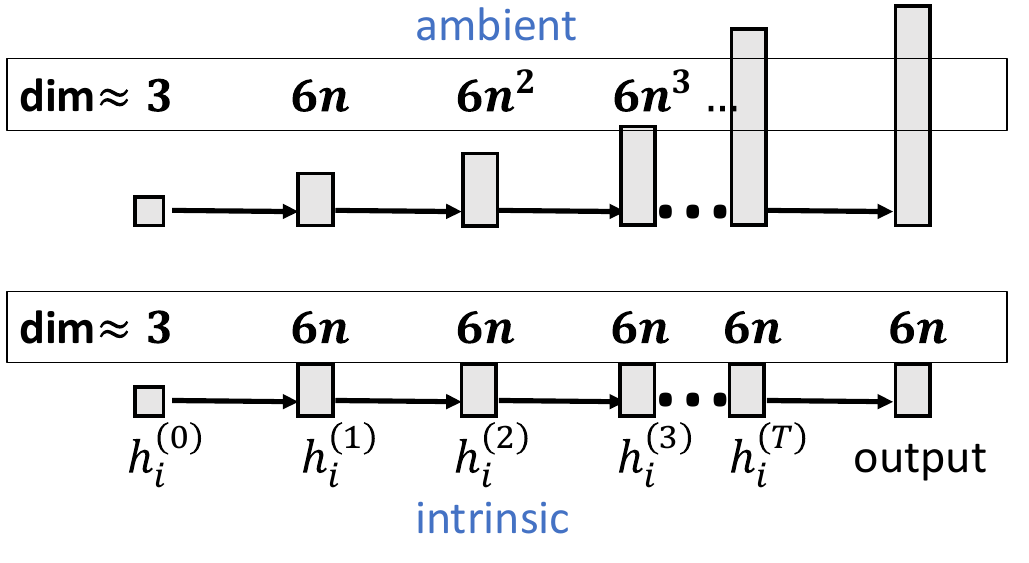}
    \caption{The exponential growth in the dimension that would result from only considering the ambient feature dimension can be avoided by exploiting the constant intrinsic dimension.}
    \label{fig:dim}
\end{figure}
\begin{table*}[ht]
    \centering
    \begingroup
    \renewcommand*{\arraystretch}{1.02}
    \begin{NiceTabular}{cccccc}
        \toprule
       Separation     & complete & $\cong$1-EWL & unknown   & unknown    &  unknown    \\  
        \toprule
        Point Clouds  & 2-SEWLnet   &  1-EWLsim       & MACE & TFN  &  GVPGNN\\
    \cmidrule(lr){1-6}
         \multicolumn{1}{l}{Hard1} & 100 \%   &  100 \%  & 100 \%  & 100 \%   &  100 \%   \\ 
        \multicolumn{1}{l}{Hard2} & 100 \%   &  100 \%  & 100 \%  & 100 \%   &  50  \%   \\ 
         \multicolumn{1}{l}{Hard3} & 100 \%   &  100 \%  & 100 \%  & 100 \%   &  95.0 ± 15.0 \%   \\ 
    \multicolumn{1}{l}{Harder}  &  100 \%   & 50 \%  & 100 \%  & 100 \%  & 53.7 ± 13.1 \%  \\

    \bottomrule
    \end{NiceTabular}
    \endgroup
    \centering
    \caption{ Separation accuracy 
    on challenging 3D point clouds. Hard examples correspond to point clouds which cannot be distinguished by a single  1-EWL iteration but can be distinguished by two iterations, according to  Theorem~\ref{thm:1wl}. The Harder example is a point cloud not distinguishable by 1-EWL  \cite{pozdnyakov2022incompleteness}. GNN implementations and code pipeline based on \cite{joshi2022expressive}.}\label{tab:results}
\end{table*}
Let us first review some known results on injective multiset functions. Recall that a function defined on multisets with $n$ elements coming from some alphabet $\Omega \subseteq \RR^D$ can be identified with a permutation invariant function defined on $\Omega^n$. A multiset function is injective if and only if its corresponding function on $\Omega^n$ is separating with respect to the action of the permutation group (see Definition~\ref{def:sep}). 

In \cite{corso2020principal,wagstaff2022universal} it was shown that for any separating, permutation invariant mappings from $\RR^n$ to $\RR^K$, the embedding dimension $K$ will be at least $n$. Two famous examples of continuous functions that achieve this bound are 
\begin{equation}\label{eq:agg}
\Psi_{sort}( x_1,\ldots,x_n  )=\mathrm{sort}(  x_1,\ldots,x_n  ) \quad  \end{equation}
and 
\begin{equation}
    \quad \Psi_{pow}( x_1,\ldots,x_n )=\left(\sum_{i=1}^n x_i^t \right)_{t=1}^n.
\end{equation}
When the multiset elements are in $\RR^D$, the picture is similar: if there exists a continuous, permutation invariant and separating mapping from  $\RR^{D\times n}$ to $\RR^K$, then necessarily $K\geq n\cdot D$ \cite{joshi2022expressive}. In  \cite{dym2023low} it is shown that continuous separating invariants for $D>1$, with near-optimal dimension, can be derived from the $D=1$ separating invariants  $\Psi=\Psi_{pow}$ or $\Psi=\Psi_{sort}$, by considering random invariants of the form 
\begin{equation}\label{def:embed}
 \embed_\theta( x_1,\ldots,x_n )=\langle b_j,\; \Psi\left(   a_j^Tx_1 \ldots,  a_j^Tx_n  \right) \rangle ,
\end{equation}
where $j=1,\ldots,K$ and each $a_j$ and $b_j$ are $d$ and $n$ dimensional random vectors, respectively, and we denote $\theta=(a_1,\ldots,a_K,b_1,\ldots,b_K)\in \RR^{K(D+n)}$. When $K=2nD+1 $, for almost any choice of $\theta$, the function $\embed_\theta $ will be separating on $\RR^{D \times n}$. Thus the embedding dimension in this construction is optimal up to a multiplicative constant of two. 

An important property of this results of \cite{dym2023low} for our discussion, is that the embedding dimension $K$ can be reduced if the domain of interest is a non-linear subset of $\RR^{D\times n}$ of low dimension. For example, if the domain of interest is a finite union of lines in $\RR^{D\times n}$, then the \emph{instrinsic dimension} of the domain is $1$, and so we will only need an embedding dimension of $K=2\cdot 1+1=3$. Thus, the required embedding dimension depends on the intrinsic dimension of the domain rather than on its \emph{ambient dimension}, which in our case is $n\cdot D $.

To formulate these results precisely we will need to introduce some real algebraic geometry terminology  (see \cite{basu2006algorithms} for more details): A \emph{semi-algebraic subset} of a real finite-dimensional vector space is a finite union of subsets that are defined by polynomial equality and inequality constraints. For example, polygons, hyperplanes, spheres, and finite unions of these sets, are all semi-algebraic sets. A semi-algebraic set is always a finite union of manifolds, and its dimension is the maximal dimension of the manifolds in this union. Using these notions, we can now state the `intrinsic version' of the results in \cite{dym2023low}:
\begin{theorem}[\citet{dym2023low}]\label{thm:lowdim}
Let $\X$ be an $S_n$-invariant semi-algebraic subset of $\RR^{D\times n}$ of dimension $D_\X$. Denote $K=2D_\X+1$. Then for Lebesgue almost every $\theta \in \RR^{K(D+n)} $ the mapping
$\embed_\theta:\X\to \RR^K$
is $S_n$ invariant and separating.
\end{theorem}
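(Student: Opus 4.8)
The statement splits into an invariance part, which holds for \emph{every} $\theta$ and is immediate, and a separation part, which is a ``generic injectivity'' argument carried out entirely in the semialgebraic category. For invariance, note that for $\sigma\in S_N$ the vector $a_j^T(X\sigma)$ is just $a_j^TX$ with its $N$ entries permuted by $\sigma$; since $\Psi=\psipow$ or $\psisort$ is $S_N$-invariant on $\RR^{1\times N}$, we get $\Psi(a_j^T(X\sigma))=\Psi(a_j^TX)$ and hence $\embed_\theta(X\sigma)=\embed_\theta(X)$. So the real content is: for Lebesgue-a.e. $\theta\in\RR^{K(D+N)}$ and all $X,Y\in\X$, $\embed_\theta(X)=\embed_\theta(Y)\Rightarrow X\underset{S_N}{=}Y$.

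\textbf{Step 1: a per-pair genericity statement.} Fix $X,Y\in\X$ with $X\underset{S_N}{\neq}Y$. Because $\Psi$ is itself a $1\times N$ separating invariant, $\Psi(a^TX)\neq\Psi(a^TY)$ precisely when the multisets $\{\!\{a^Tx_i\}\!\}$ and $\{\!\{a^Ty_i\}\!\}$ differ, which by Newton's identities happens precisely when one of the power-sum differences $p_k(a):=\sum_i(a^Tx_i)^k-\sum_i(a^Ty_i)^k$, $k=1,\dots,N$, is nonzero. Each $p_k$ is a polynomial in $a\in\RR^D$, so the ``bad direction'' set $\{a:p_1(a)=\dots=p_N(a)=0\}$ is an algebraic variety, and I claim it is \emph{proper} — hence of measure zero — whenever $X\underset{S_N}{\neq}Y$. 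This is the classical fact that a point multiset in $\RR^D$ is determined by its one-dimensional projections: if all $p_k$ vanished identically then $\sum_i f(a^Tx_i)=\sum_i f(a^Ty_i)$ for every polynomial $f$ and every $a$, in particular for $f=\exp$, and linear independence of the characters $a\mapsto e^{\langle v,a\rangle}$ would force $\{\!\{x_i\}\!\}=\{\!\{y_i\}\!\}$, a contradiction. Consequently the bad set $B_{X,Y}=\{(a,b)\in\RR^{D+N}:\langle b,\Psi(a^TX)-\Psi(a^TY)\rangle=0\}$ is Lebesgue-null: the $a$'s with $\Psi(a^TX)=\Psi(a^TY)$ form a null set, and over every other $a$ the admissible $b$'s form a hyperplane, so Fubini finishes it. (For $\Psi=\psisort$ one first restricts to each of the finitely many polyhedral sorting cells, on which the map is linear, so $B_{X,Y}$ is a finite union of null semialgebraic sets.)

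\textbf{Step 2: the dimension count yielding $K=2D_\X+1$.} It remains to upgrade ``for each bad pair, a.e.\ $\theta$ is good'' to ``for a.e.\ $\theta$, \emph{every} bad pair is good.'' Form the incidence set
\[
\mathcal B=\bigl\{(X,Y,\theta)\in\X\times\X\times\RR^{K(D+N)}:\ X\underset{S_N}{\neq}Y,\ \embed_\theta(X)=\embed_\theta(Y)\bigr\},
\]
writing $\theta=(\theta_1,\dots,\theta_K)$ with $\theta_\ell=(a_\ell,b_\ell)\in\RR^{D+N}$. Everything here is semialgebraic ($\psisort$ is piecewise linear, $\psipow$ polynomial, and $\{X\underset{S_N}{\neq}Y\}$ is the complement in $\X\times\X$ of finitely many graphs), so $\mathcal B$ is semialgebraic. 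The pair-locus $\{X\underset{S_N}{\neq}Y\}$ has dimension $\le 2D_\X$, and over each of its points the $\theta$-fiber of $\mathcal B$ equals $\prod_{\ell=1}^K B_{X,Y}$, a product of $K$ sets each of dimension $\le D+N-1$ by Step~1. Hence $\dim\mathcal B\le 2D_\X+K(D+N-1)$, which for $K=2D_\X+1$ is exactly $K(D+N)-1$. Since semialgebraic maps do not raise dimension, the projection of $\mathcal B$ to $\RR^{K(D+N)}$ has dimension $<K(D+N)$ and therefore measure zero; any $\theta$ outside it gives an $\embed_\theta$ that separates all pairs in $\X$, i.e.\ is $S_N$-separating.

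\textbf{Main obstacle.} Invariance and the Fubini step are routine; the two load-bearing points are (i) upgrading ``proper semialgebraic'' to ``Lebesgue-null'' for the per-pair bad set, which is exactly what the ``one-dimensional projections determine a point multiset'' lemma provides and is the reason power sums / sorting are used rather than an arbitrary continuous separator; and (ii) the semialgebraic dimension bookkeeping that makes $2D_\X+1$ coordinates suffice — here one must be careful that all the sets involved are genuinely semialgebraic (so that dimension theory and Tarski–Seidenberg projection apply), that fiber dimensions add as claimed, and that the piecewise-polynomial nature of $\psisort$ does not spoil semialgebraicity of $\mathcal B$.
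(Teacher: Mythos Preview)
Your proposal is correct and matches the approach of the cited source. Note that the present paper does not itself prove this theorem: it is quoted from \cite{dym2022low} and used as a black box. The only place the paper sketches the underlying mechanism is in Appendix~\ref{app:msvec}, where the extension to vector--multiset pairs (Theorem~\ref{thm:lowdim2}) is outlined, and that sketch follows exactly your two-step template: (i) for each fixed inequivalent pair the bad parameter set is a proper semialgebraic subset of $\RR^{D+N}$ (their Proposition~2.1), and (ii) a semialgebraic incidence/dimension count turns this into a global statement with $K=2D_\X+1$ coordinates (their ``main theorem'' and Lemma~1.10).

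One cosmetic point: in Step~1 you write ``for every polynomial $f$ \dots\ in particular for $f=\exp$,'' which reads oddly since $\exp$ is not a polynomial. The argument is still sound---from $p_k\equiv 0$ for $k=1,\dots,N$ you get multiset equality $\{\!\{a^Tx_i\}\!\}=\{\!\{a^Ty_i\}\!\}$ for each $a$, and the character identity $\sum_i e^{\langle x_i,a\rangle}=\sum_i e^{\langle y_i,a\rangle}$ then follows pointwise---but you may want to rephrase so the logical order is explicit.
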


\subsection{Multiset Injective Functions for GNNs}\label{sec:injectivegnn}
We now return to discuss GNNs and explain the importance of the distinction between the intrinsic and ambient dimensions in our context. Suppose we are given $n$ initial features $(h_1^{(0)},\ldots,h_n^{(0)}) $ in $\RR^d$, and for simplicity let us assume they are recursively refined via the simple aggregation rule:
\begin{equation}\label{eq:mpnn_simple}
 h_i^{(t+1)} = \embed^{(t)}\lb\lms h_j^{(t)} \rms_{j=1, j\neq i}^{n}\rb. 
\end{equation}
Let us assume that each $\embed^{(t)}$ is injective on the space of all multisets with $n-1$ elements in the ambient space of $h_j^{(t)} $.
Then the injectivity of  $\embed^{(1)}$ implies that $h_i^{(1)}$ is of dimension at least $(n-1)\cdot d$. The requirement that $\embed^{(2)}$ is injective on a mult-set of $n-1$ features in $\RR^{(n-1)\cdot d} $ implies that $h_i^{(2)} $ will be of dimension at least $(n-1)^2\cdot d$. Continuing recursively with this argument we obtain an estimate of $\sim (n-1)^{T}d $ for the dimensions of each $h_i^{(T)}$ after $T$ iterations of \eqref{eq:mpnn_simple}.

Fortunately, the analysis presented above is overly pessimistic because it focused only on the \emph{ambient dimension}. Let us denote the matrix containing all $n$ features at time $t$ by $H^{(t)}$. Then $H^{(t)}=F_t(H^{(0)}) $, where $F_t$ is the  concatenation of all $\embed^{(t')}$ functions from all previous time-steps. Thus $H^{(t)}$ resides in the set $F_t(\RR^{d\times n})$. Here we again rely on results from algebraic geometry: if $F_t $ is a composition of piecewise linear and polynomial mappings, then it is a semi-algebraic mapping, which means that $F_t(H^{(0)})$ will be a semi-algebraic set of dimension $\dim(\RR^{n \times d})= n \cdot d $. This point will be explained in more detail in the proof of Theorem~\ref{thm:6n}. By Theorem~\ref{thm:lowdim}, we can then use $\embed_\theta$ as a multiset injective function on $\X_t$ with a fixed embedding dimension of $2n\cdot d +1$ which does not depend on $T$. This is visualized in Figure~\ref{fig:dim}.


\paragraph{2-SEWLnet} 
Based on the discussion above, we can devise architectures that simulate the various tests discussed in this paper and have reasonable feature dimensions throughout the construction, In particular, we can simulate $T$ iterations of the $2$-SEWL test by replacing all $\embed^{(t)}$ functions\footnotemark \space with  $\embed_\theta^{(t)} $, where in our implementation we choose $\Psi=\Psi_{sort}$ in \eqref{def:embed}. The embedding dimension for all $t$ is taken to be $6n+1$, since the input is in $\RR^{3\times n}$. We denote the obtained parametric function by $F_\phi $. Based on a formalization of the discussion above, we prove in the appendix that $F_\phi$ has the separation power of the complete $2$-SEWL test, and therefore $F_\phi$ is separating.
\begin{restatable}{theorem}{net}\label{thm:6n}
Let $F_\phi$ denote the parametric function simulating the $2$-SEWL test. Then for Lebesgue almost every $\phi$ the function $F_\phi:\RR^{3\times n}\to \RR^{6n+1} $ is separating with respect to the action of $\SESthree$. 
\end{restatable}
\footnotetext{A minor technicality is that the $\embed$ functions are actually defined on vector-multiset pairs. This issue is discussed in the proof of the theorem.}

To conclude this subsection, we note that while sort-based permutation invariants are used as aggregators in GNNs \cite{zhang2020fspool,zhang2018end,blondel2020fast}, the polynomial-based aggregators $\Psi_{pow}$ are not as common. To a certain extent, one can use the approach in \cite{xu2018powerful,prov},  replace the polynomials in $\Psi_{pow}$ by MLPs, and justify this by the universal approximation power of MLPs. A limitation of this approach is that it only guarantees separation at the limit.

\section{Synthetic Experiments}\label{sec:exp}





In this section, we implement 2-SEWLnet and empirically evaluate its separation power, and the separation power of alternative $\SESthree $ invariant point cloud architectures. We trained the architectures on permuted and rotated variations of highly-challenging point-cloud pairs, and measured separation by the test classification accuracy. We considered three pairs of point clouds (Hard1-Hard3) from \citet{pozdnyakov2020incompleteness}. These pairs were designed to be challenging for distance-based invariant methods. However, our analysis reveals that they are in fact separable by two iterations of the 1-EWL test. We then consider a pair of point clouds from \cite{pozdnyakov2022incompleteness} which was proven to be indstinguishable by the 1-EWL tests. The results of this experiment are given in Table~\ref{tab:results}.   Further details on the experimental setup appear in Appendix B. 

%


As expected, we find that $2$-SEWLnet, which has complete separation power, succeeded in perfectly separating all examples. We also found that the simulation of 1-EWL does not separate the \textbf{Harder} example, but \textit{does} separate the \textbf{Hard} example after two iterations, as predicted by Theorem~\ref{thm:1wl}. We also considered three additional invariant point cloud models whose separation power is not as well understood. We find that  MACE \cite{mace} and TFN \cite{thomas2018tensor} achieve perfect separation, (when applying them with at least 3-order correlations and three-order $SO(3)$ representations). The third GVPGNN \cite{jing2021learning} architecture attains mixed results.
We note that we cannot necessarily deduce from our empirical results that MACE and TFN are complete. While it is true that TFN is complete when considering arbitrarily high order representations \cite{Dym2020OnTU}, it is not clear whether order three representation suffices for complete separation. We conjecture that this is not the case. However, finding counterexamples is a challenging problem we leave for future work. 


\subsection*{Future Work} 
In this work,  we presented several invariant tests for point clouds that are provably complete, and have presented and implemented $2$-SEWL-net which simulates the complete $2$-SEWL test. Currently, this is a basic implementation that only serves to corroborate our theoretical results. A practically useful implementation requires addressing several challenges, including dealing with point clouds of different sizes, the non-trivial $\sim n^4 $ complexity of computing even the relatively efficient $2$-SEWL-net, and finding learning tasks where complete separation leads to gains in performance. We are actively researching these directions and hope this paper will inspire others to do the same.    
\section*{Acknowledgements}
This research was supported by the Israel Science Foundation grant no. 272/23.
\bibliography{aaai24}

\appendix
\section*{Appendix}
\section{A :  Proofs}\label{app:proofs}
We begin by stating and proving a result mentioned in the main text: once we construct an invariant separator, we can obtain a universal model by composing the separation with a standard fully connected neural network:  

\begin{theorem}[Separation Implies Universality]
    Let $f: \RR^{d \times n } \to \RR$ be a $G$-invariant continuous function. If $F: \RR^{d\times n} \to \RR^{M}$ is an invariant separator, then for any compact set $K\subset \RR^{d \times n}$, and any $\epsilon>0$, there exists a neural network  $\N^{\epsilon} : \RR^{M} \to \RR$ such that $\underset{x \in K}{\sup}\m f(x) - \N^{\epsilon} \circ F(x) \m < \epsilon$.
\end{theorem}

\begin{proof}
    Let $\epsilon > 0$ and $K\subseteq \mathbb{R}^{d\times n}$ be a compact set.
    Using Proposition 1.3 in \cite{dym2023low}, there exists a continuous $f^{\epsilon}$ such that
    $$ |f(x) - f^{\epsilon}\circ F(x)|<\frac{\epsilon}{2} $$
    The image of a compact set under a continuous function is a compact set, see \cite{munkres2000topology}, then $ S \coloneqq \Ima{(F)}  $ is compact. By the Universal Approximation Theorem \cite{cybenko1989approximation}, we can approximate $f^{\epsilon}$ with a fully-connected Neural Network with arbitrary precision, i.e. there exists a Neural Network Function $\N^{\epsilon}$ such that for all $x\in S$, $$ | f^{\epsilon}(x) - \N^{\epsilon}(x)| < \frac{\epsilon}{2}  $$
    By the Triangle Inequality, for all $x\in K$, 
    $$|f(x) - \N^{\epsilon}\circ F(x) | $$ $$\leq | f(x) - f^{\epsilon}\circ F(x) | + | f^{\epsilon}\circ F(x) - \N^{\epsilon}\circ F(x) | < \frac{\epsilon}{2} + \frac{\epsilon}{2} = \epsilon $$
\end{proof}
\onewl*
\begin{proof}
        Assume we initialize the hidden states with null information. After a single iteration, we have \begin{equation}\label{eqn:oneiter}
        h_{i}^{(1)} = \lms \| x_i - x_j \| \; \m \;  j\neq i \rms = d\lb i, X \rb
    \end{equation} 
    By assumption, all $h_{i}^{(1)}$, i = 1,$\ldots$, n, are distinct. 
    Thus at the next iteration, \begin{equation}\label{eqn:itertwoms}
        h_{i}^{(2)} = \lb h_{i}^{(1)},  \lms (h_{j}^{(1)},  \| x_i - x_j \| ) \; \m \;  j\neq i \rms \rb
    \end{equation}
    
    \begin{equation}\label{eqn:itertwo}
       \lms h^{(2)}_{i} \; \m \; i \in [n] \rms
    \end{equation}
    we know each node's \textit{ordered} distances from the other nodes, as the $i-th$ node is uniquely (intra-point-cloud) determined by $h^{(1)}_i$. Thus we can recover the distance matrix
\[
\begin{blockarray}{cccccc}
 h^{(1)}_1 & h^{(1)}_2 & . & . & h^{(1)}_n \\
\begin{block}{(ccccc)c}
  \|x_1 - x_1\| & \| x_2 - x_1  \| & . & . & \| x_n - x_1  \| & h^{(1)}_1 \\
  \| x_1 - x_2 \| & 0 & . & . & . & h^{(1)}_2 \\
  . & . & 0 & . & . & . \\
  . & . & . & 0 & . & . \\
  \| x_1 - x_n \| & \| x_2 - x_n \| & . & . & \| x_n - x_n \| & h^{(1)}_n \\
\end{block}
\end{blockarray}
 \]
 Thus, we can fully recover the point cloud up to Euclidean motion, see Section E in \cite{egnn}. In conclusion, if $X, Y \in \mathbb{R}^{d \times n}$ are assigned the same value by 1-EWL, then they are identical up to permutation and Euclidean motion, i.e. an $\O[3,n]$ transformation.
\end{proof}
By Theorem~\ref{thm:1wl}, $1$-EWL is complete on $\Xdistinct$. We now show that $1$-EWL is incomplete at most on a (non-trivial) measure-zero set, thus by definition, it is complete almost everywhere on the space of point clouds endowed with permutation, rotation, and reflection symmetries.
\begin{theorem}[1-EWL Separates Almost All Complete Euclidean Graphs]
    Let $\mu$ be the Lebesgue measure on $\RR^{3 \times n}$ where $n\geq 3$. Then, $\mu(\RR^{3\times n} \setminus \Xdistinct) = 0$.     
\end{theorem}
\begin{proof}
    
    We defined $\Xdistinct = \{X \in \RR^{3\times n}\; \m \; d(i,X)\neq d(j,X)\ \  \forall i\neq j  \}$. 
    
    Then 
    $$\Rthree \setminus \Xdistinct = $$ $$ \{ X \in \Rthree \; \m \; \exists i\neq j \; s.t. \; \; d\lb j, X \rb = d\lb i, X \rb \} $$ 
    \begin{equation}\label{eqn:surface}
        = \{ X \in \Rthree \; \m \; \exists i\neq j\in [n] \; s.t. \;cond_{i,j}\;  \text{holds}\}
    \end{equation}
    where $\textit{cond}_{i,j}:= \| \psi_{pow}\lb d(j,X) \rb - \psi_{pow} \lb d(i,X)\rb \|^2 = 0$ and $\psi_{pow}$ is the power-sum polynomials defined as $\psi_{pow}(\Vec{x}) = \lb \sum_{i=1}^{n} x_i, \ldots, \sum_{i=1}^{n} x^n_i \rb$, which is known to be injective on multisets with $n$ elements.

    Equation \ref{eqn:surface} defines an algebraic manifold with a non-trivial polynomial equality constraint, thus is of dimension $\leq 3n - 1$. If an algebraic manifold embedded in $\Rthree$ has dimension $\leq 3n-1$, then it has measure zero \cite{Mityagin2020}.

\end{proof}

\sewl*
 \begin{proof}
    
    Let X, Y $\in \mathbb{R}^{3\times n}$. Recall that a single iteration of the $2$-SEWL assigns to each index pair $i,j$ an initial coloring $\mathbf{C}_{(0)}[i,j]=\mathbf{C}_{(0)}[i,j](X) $ corresponding to the $2\times 2$ Gram matrix of the points $x_i,x_j$. The coloring is then refined via
    $$
\mathbf{C_{(\mathbf{1})}}(i,j) = \embed^{(0)} (  \mathbf{C_{(\mathbf{0})}}(i,j), $$

$$\lms \left( \mathbf{C}_{(\mathbf{0})}(k,j), \mathbf{C}_{(\mathbf{0})}(i,k)  ,   \langle x_i\times x_j, x_k \rangle  \right) \rms_{k=1}^{n}  ). $$
and then a final global coloring is obtained from
$$\mathbf{C}_\G=\embed^{(1)}\left( \lms \mathbf{C}_{(1)}(i,j)| \, (i,j)\in [n]^2   \rms \right) $$
 Let us denote the global feature $\mathbf{C}_\G $ obtained from $X$ by  $\mathbf{C}_\G(X) $, and   the global feature  obtained from $Y$ by  $\mathbf{C}_\G(Y) $.
 
 By construction, if $X\underset{\SESthree}{=}Y$ then $\mathbf{C}_\G(X)=\mathbf{C}_\G(Y)$. We need to prove that if $\mathbf{C}_\G(X)=\mathbf{C}_\G(Y)$ then $X\underset{\ESthree}{=}Y$. 

    To make the proof more readable, we introduce the following notation (we will later describe its significance):

\begin{align}\label{eq:cloud}
X_{[i,j]}&=\left[x_i,x_j,x_i\times x_j \right]\in \RR^{3 \times 3}\\ 
P_{[i,j,k]}&= X_{[i,j]}^T x_k\\
G_{[i,j]}(X)&=X^T_{[i,j]}X_{[i,j]}\\
h_{[i,j]}(X)&=\embed_{\alpha} \lms\label{eqdef_h}
P_{[i,j,k]} \, \mid \, k\in [n], k\neq i,j \rms \\
m_{[i,j]}(X)&=\left( G_{[i,j]}(X) ,h_{[i,j]}(X) \right)
\end{align}
We now show that the multiset $C_\G(X)=\lms \mathbf{C_{(1)}(i,j)} \m i,j \in [n] \rms$ allows recovering the multiset $h_X:=\lms m_{[i,j]} \m i,j \in [n], i,j\in [n] \rms$.

It is enough to show that we can recover $m_{[i,j]}$ from its corresponding $\mathbf{C}_{(1)}[i,j]$ for every $i,j\in [n]$ and then the multiset equivalence follows immediately.
 Note that $G_{[i,j]}$ is the $3\times 3 $ Gram matrix of the vectors $x_i,x_j,x_i \times x_j$. It can be recovered from  $\mathbf{C_{(0)}(i,j)}$, which is the $2\times 2$ Gram matrix of $x_i,x_j$, because  $\langle x_i \times x_j , x_k\rangle=\langle x_i \times x_j , x_i\rangle=0$ and 
$$\| x_i \times x_j\| = \| x_i \| \| x_j\||\sin{\theta}| = \| x_i \| \| x_j\|\sqrt{1- \cos{\theta}^2} $$ $$ =\| x_i \| \| x_j\|\sqrt{1- \lb \frac{\langle x_i, x_j \rangle}{\|x_i\|\|x_j\|} \rb^2},$$
where $\theta$ is the angle between $x_i$ and $x_j$. The quantity on the RHS of the equation can be extracted from $\mathbf{C_{(0)}(i,j)}$.

As for $h_{i,j}$, we can recover it as a multiset since: 
$$\label{eqn:projections}
    P_{[i,j,k]} = \lb \langle x_i, x_k \rangle, \langle x_j, x_k \rangle, \langle x_i \times x_j, x_k \rangle \rb $$ 
    
    $$= \lb \mathbf{C}_{(0)}(i,k)[1,2], \mathbf{C}_{(0)}(k,j)[1,2], \langle x_i \times x_j, x_k \rangle \rb    
$$

We saw that  $h_X=\embed\lms m_{[i,j]}(X) | i,j\in [n] \rms $ can be recovered from $\mathbf{C}_\G(X) $ and thus in particular our assumption that $\mathbf{C}_\G(X)=\mathbf{C}_\G(Y)  $ implies that $h_X=h_Y $. We will now use this to show that $X\underset{\SESthree}{=}Y $. 

We first deal with the degenerate case where all points in $X$ are identical, that is $x_1=x_2=\ldots=x_n$. In this case, all Gram matrices $G_{i,j}(X) $, and all entries in each of the matrices, will be identical, and thus by assumption also all Gram matrices $G_{i,j}(Y)$, and all their entries will be identical. This implies that $Y$ also consists of a single point with the same norm as the one point in $X$, and therefore $X\underset{\SESthree}{=}Y $. 
   
 We can now assume that not all points in $X$ are the same.   Define $r(X) =\mathrm{rank(X)}= \underset{i, j\in[n]}{ \max}\, \rank (G_{[i,j]}(X))$ (note that we assume that $n\geq 3$). By assumption we have $\lms G_{[i,j]}(X) | i,j \in[n] \rms = \lms G_{[i,j]}(Y) | i,j \in[n] \rms $ , thus $r(X) = r(Y)$ and there exist $i\neq j$ and 
 $s, t\in [n]$ such that $G_{[i,j]}(X) = G_{[s,t]}(Y)$, ($\star$)  they both have rank $r$, and $x_i \neq x_j$. Due to the fact that they both have rank r, and 
$x_i \neq x_j$, it follows that $y_s\neq y_t $ and in particular $s\neq t$. 
 
    By \cite{kraft1996classical}, the equality of Gram matrices implies that there exists an orthogonal transformation, $T\in \O(3)$,  such that \begin{equation}\label{eqn:T}
        T(x_i)=y_s, \; T(x_j)=y_t, \; T(x_i \times x_j) = y_s \times y_t.  
    \end{equation}
    If $x_i \times x_s\neq 0$ we see that $T$ preserves orientation and therefore $T\in \SO(3) $ (see Remark~\ref{rem:orientation}). If not, and if  $T$ is a reflection, we can modify $T$ to be a rotation that still satisfies \eqref{eqn:T} by composing it with a reflection that fixes the $\leq 1$ dimensional subspace spanned by $x_i,x_j$. Thus in any case we can assume that $T\in \SO(3)$.  
    By assumption and ($\star$), we have $ \lms P_{[i,j,k]}(X) \, k\neq i,j \rms =  \lms P_{[s,t,k]}(Y), \, k\neq s,t \rms$. \newline
    This implies that there exists some permutation $\sigma\in S_n$ such that $\sigma(i)=s. \sigma(j)=t$ and $P_{[i,j,k]}(X)= P_{[s,t,\sigma(k)]}(Y) $ for all $k\neq i,j$. We deduce that for all $k\neq i,j$
    \begin{align*}
    \langle y_s,Tx_k\rangle=\langle Tx_i,Tx_k\rangle=\langle x_i,x_k\rangle=\langle y_s,y_{\sigma(k)}\rangle\\
    \end{align*}
    and similarly 
    \begin{align*}
    \langle y_t, Tx_k \rangle&=\langle y_t,y_{\sigma(k)}\rangle\\
    \langle y_s\times y_t, Tx_k \rangle&= \langle y_s\times y_t,y_{\sigma(k)}\rangle\\
    \end{align*}
    Now note that each $y_k$ is in the span of $y_s,y_t,y_s\times y_t$ (even when $r<3$), and similarly every $x_k$ is in the span of $x_i,x_j,x_i \times x_j$ and so $Tx_k$ is also in the span of $y_s,y_t,y_s\times y_t$. It follows that $Tx_k-y_{\sigma(k)}=0 $, and thus we showed that $X$ and $Y$ are related by a $\SESthree $ transformation.

\end{proof}
\begin{remark}\label{rem:orientation}
In the proof above we said that if \eqref{eqn:T} holds, and $y_s \times y_t $ is not zero, then $T$ is in $SO(3)$. This follows from the fact that for general orthogonal transformations and vectors $a,b$ 
$$(Ta)\times (Tb)=\det(T)T(a\times b). $$
Setting $a=x_i,b=x_j$ and using \eqref{eqn:T}  we obtain that 
$$y_s\times y_t=\det(T)(y_s \times y_t) $$
and so if $y_s \times y_t$ is not zero, then $\det(T)=1$.
\end{remark}

\net*
\begin{proof}
We recall that $F_\phi$ is defined to simulate a single iteration of the $2$-SEWL test using sort-based injective multiset functions. In more detail, recall that the initial coloring corresponding to an index pair $(i,j)$ and a point cloud $X\in \RR^{3 \times n} $ is given by the Gram matrix of $x_i,x_j$, and denoted by $\mathbf{C}_{(0)}(i,j)=\mathbf{C}_{(0)}(i,j)(X) $. We then define 
$$\mathbf{C_{(\mathbf{1})}}(i,j) =  ( \mathbf{C_{(\mathbf{0})}}(i,j),$$ 

$$ \embed_{\alpha} \left( \left[ \mathbf{C}_{(\mathbf{0})}(k,j), \mathbf{C}_{(\mathbf{0})}(i,k)  ,   \langle x_i\times x_j, x_k \rangle  \right]_{k=1}^{n} \right) ). 
$$and 
$$\mathbf{C}_\G=\embed_{\beta}\left( \mathbf{C}_{(1)}(i,j)| \, (i,j)\in [n]^2  \right) $$
where $\embed_{\theta}(y_1,\ldots,y_n)$ is permutation invariant (=multiset function), continuous in $\theta$ and $y_i$, and defined by
\begin{equation}\label{def:embed2}
\embed_\theta( y_1,\ldots,y_n )=\langle b_j,\; \Psi\left(   a_j^Ty_1 \ldots,  a_j^Ty_n  \right) \rangle 
\end{equation}
with $j=1,\ldots,6n+1$ and $\Psi=\mathrm{sort} $ (or alternatively, $\Psi$ could be the power sum polynomials), and $\theta$ denoting the concatenation of all the mapping parameters $a_i$ and $b_j$. 

We denote by $\phi=(\alpha,\beta)$ the concatenation of the two-parameter vectors of the $\embed$ mappings in the constructions, and $F_\phi(X)$ denoted the output $C_\G=C_\G(X; \phi)$ obtained by this construction.

Since we already showed that the 2-SEWL test is complete, it is sufficient to show that for Lebesgue almost every $(\alpha,\beta)$, the mapping $\embed_\alpha$ is permutation invariant and separating on $\RR^{3\times n} $, and the mapping $\embed_\beta$ is permutation invariant and separating on the image of the mapping $f_\alpha$ which we define as 
$$f_\alpha(X)= \left( \mathbf{C}_{(1)}(i,j)(X;\alpha)| \, (i,j)\in [n]^2  \right) .$$
By Theorem~\ref{thm:lowdim}  we know that $\embed_\alpha$ is separating for Lebesgue almost every $\alpha$. For fixed $\alpha$, we know that $f_\alpha$ is a semi-algebraic mapping, since it is a composition of polynomials and the piecewise linear sort function, which are semi-algebraic mappings, and as compositions of semi-algebraic mappings are semi-algebraic mappings. The dimension of the image of a semi-algebraic mapping is never larger than the dimension of the domain, and so $f_\alpha(\RR^{3\times n}) $ is a semi-algebraic set of dimension $\leq \dim(\RR^{3\times n})=3n $ (see \cite{basu} for the necessary real algebraic geometry statements regarding composition and dimension). To apply Theorem~\ref{thm:2sewl} we need to work with a permutation invariant domain, so we artificially enlarge the domain of  $\embed_\beta$ to be 
$$\bigcup_{\sigma\in S_{n^2}} \sigma(f_\alpha(\RR^{3\times n})) $$
which is a finite union of sets of dimension $\leq 3n$ and hence also has dimension $\leq 3n$. It follows that for almost every $\beta$ the function $\embed_\beta$ is separating  on this permutation invariant set, with embedding dimension of $6n+1$ as we defined in \eqref{def:embed2}. Using Fubini's theorem, this implies that for almost every $(\alpha,\beta)$ the functions $\embed_\alpha$ and $\embed_\beta$ are both separating, and this proves the theorem.
\end{proof}

\paragraph{Complexity}
We conclude by discussing the complexity of computing $F_\phi$. Calculating each $\mathbf{C}_{(1)}(i,j) $ using sort-based embeddings $\embed_\alpha $ requires $\mathcal{O}(n^2\log(n)) $ operations. 

Since there are $\mathcal{O}(n^2)$ such $\mathbf{C}_{(1)}(i,j)$ the total complexity of computing all of them is $\mathcal{O}(n^4\log(n)) $. In the second step we compute $\embed_\beta$ on multisets of size $D\times N $ where $D=\mathcal{O}(n), N=\mathcal{O}(n^2) $, and  with embedding dimension of $\mathcal{O}(n)$. This requires $\mathcal{O}(n^4 + n^3\log(n)) $ operations , so the total complexity is $\mathcal{O}(n^4\log(n)) $


In Appendix~\ref{app:dim} we extend our results to arbitrary $d$. In this case, we get a complexity of  $\mathcal{O}(n^{d+1}\log(n))$ (where for simplicity we consider the limit $n\rightarrow \infty$ with $d$ fixed, to cancel out some mixed terms in $n,d$ which are negligible in this limit. ).
 

\section{B : Experiment Details}\label{app:experiments}
As mentioned, we exemplified the viability of the theory presented by testing separation on challenging point cloud pairs. We wished to address the following scenario: given a pair of point clouds, each labeled distinctly, what would be the accuracy score of $\SESthree$ (or $\ESthree$) invariant architectures in this classification task following training on a labeled dataset of these examples? This setup partly informs us of the \textit{separation} capability of these architectures, i.e. how well do these models distinguish similar ( for instance, 1-EWL equivalent ), yet non-isomorphic, input?

For implementation, we used code by \cite{joshi2022expressive} that implements several contemporary $\SESthree$ invariant architectures and evaluated them as described below. This framework has several additional tests for geometric graphs, but they were irrelevant to our setting because they are redundant for the fully-connected geometric graphs we focus on. We modified the implementation of \cite{joshi2022expressive}  by implementing our novel invariant architectures, 2-SEWLnet, implementing 1-EWLsim, and testing counterexample point cloud pairs from \cite{pozdnyakov2020incompleteness, pozdnyakov2022incompleteness}. We used implementation by \cite{joshi2022expressive}  of MACE \cite{mace}, TFN \cite{thomas2018tensor} and GVPGNN \cite{jing2021learning}. The $\SESthree$ invariant architectures are trained on replicas of each counterexample pair and then testing is performed on the same pair. 

\subsection{Technical Details}
{\centering \begin{table*}[ht]
    \centering
    \begingroup
    \renewcommand*{\arraystretch}{1.02}
    \begin{NiceTabular}{cccccc} 
        \toprule
        Hyperparameters  & 2-SEWLnet   &  1-EWLsim       & MACE & TFN  &  GVPGNN\\
    \cmidrule(lr){1-6}
         \multicolumn{1}{l}{Learning rate} & 0.0001  &  0.0001  & 0.0001  & 0.0001   &  0.0001   \\ 
        \multicolumn{1}{l}{Hidden Dimension} & 2 (Pair-wise)   &  1   & 2  & 64   &  64   \\ 
         \multicolumn{1}{l}{Number of Layers} & 1   &  2  & 3  & 3   &  3  \\ 
    \multicolumn{1}{l}{Batch size}  &  1   & 1  & 1  & 1  & 1  \\
\multicolumn{1}{l}{Correlation}  &  NA   & NA  & 3  & 3  & NA  \\
    \bottomrule
    \end{NiceTabular}
    \endgroup
    \centering
    \caption{ GNN implementations and code pipeline based on \cite{joshi2022expressive}.}\label{tab:config}
\end{table*} \par }

We trained the various invariant models on an  NVIDIA A40 GPU implemented in PyTorch \cite{NEURIPS2019_9015}. The hyperparameters were a learning rate of 0.0001 with Adam optimizer \cite{kingma2017adam}, with the learning rate scheduler ReduceOnPLateau that reduces the learning rate once the loss stopped diminishing. We trained each model on a dataset of 50 copies of each pair for 100 epochs, while injecting permutation and rotation to each point cloud during training. The test and validation datasets are each a pair of plain (no permutation or rotation injected) point clouds. Thus each epoch has ternary accuracy results, of 0\%, 50\%, and 100\%. We then average the accuracy of the last 20\% of epochs to obtain the overall accuracy. This was done to allow the model to converge while allowing for a sufficiently large test measurements to obtain statistical significance.

\subsection{2-SEWLnet}

2-SEWLnet is an implementation of a simulation of a single iteration of 2-SEWL. We implemented the architecture by directly embedding the multiset function $F_\phi$, using the low-dimensional invariant embeddings from the Section Multiset injective functions. We chose the \text{sort} function as the one-dimensional permutation separating invariant, as it constitutes an isometry from $\mathbb{R}^n / S_n$ to $\mathbb{R}^n$, then composing it with linear mappings, yields a Bi-Lipschitz mapping \cite{balan2022permutation, dym2023low}. We used differentiable sorting from PyTorch \cite{NEURIPS2019_9015} to enable backpropagation. We note that the model is able to learn the classification task with backpropagation only using the $\sort$ vector-wise activation, i.e. without a fully-connected neural network composed on it. This is not a trivial result that is immediately implied by the completeness of $2$-SEWL, as we aim to minimize the softmax cross-entropy loss and in practice reach almost zero loss, thus not only yielding two distinct embeddings corresponding to each distinct point cloud  ( as guaranteed by Theorem~\ref{thm:2sewl} ) but learning them to be (approximately) one-hot encoded vectors using our injective continuous $\embed$ functions.

\subsection{1-EWLsim}
 Interestingly,  we found that using the sum aggregation, the model did not yield sufficient separation for the classification task for the examples Hard1-3 in Table~\ref{tab:results}, yet when using $\sort$ and non-linear point-wise activations, rather than sums of neural network functions applied point-wise, led to perfect classification on these counterexamples. The results in the table are reported with the latter implementation. In the code, we denoted this implementation as "egnn", as this simulation is inspired by the invariant version of EGNN \cite{egnn}. 
\section{C : Background Theory}\label{app:msvec}
\subsection{Low Dimensional Separating Invariants}
In the main text, we presented a condensed summary of the results from \cite{dym2023low} as they pertain to this paper's scope. We devote this appendix to expound on the context of these results. In Subsection Multiset injective functions, we discussed the \textit{Power-Sum Symmetric Polynomials}. These polynomials yield a separating invariant with respect to permutations of real-valued vectors in $\mathbb{R}^n$. The invariant learning literature often discussed an extension of this characterization for vector-valued features, the \textit{Multi-Power-Sum Symmetric Polynomials}, defined, for an input $X=\lb x_1 \ldots x_n \rb\in \RR^{d\times n}$ and $\alpha=(\alpha_1,\ldots,\alpha_d)\in \NN_{\geq 0}^d$, as
\begin{align}\label{eqn:mssym}
    P_{\alpha}(X) = \sum_{i=1}^{n} x_{i}^{\alpha} \\
    P(X) = \lb P_{\alpha}(X) \; \rb_{\alpha \in [n]^{d}, \m \alpha \m \leq n}
\end{align}
where $x^{\alpha} :=x_{1}^{\alpha_1}\ldots x_{d}^{\alpha_d}$ and $|\alpha| := \sum_{i=1}^{n} \alpha_i$. These polynomials define a separating invariant for point clouds in $\mathbb{R}^{d}$ with respect to the permutation of the (n) columns \cite{prov}. Yet, its embedding dimension is $\binom{n+d}{d}$. The goal of \cite{dym2023low} was to reduce the embedding dimension to a complexity linear in the $n\cdot d$ dimension of the input.



As a first example of such a result, \cite{dym2023low} show the for Lebesgue almost every $\binom{n+d}{d}$ dimensional vectors $w_1,\ldots, w_{2nd+1}$
\begin{align}\label{eq:sample}
    X \underset{S_n}{=} Y \iff \langle w_i, P(X) \rangle = \langle w_i, P(Y) \rangle, 
\end{align}
where $i=1,\ldots, 2\dim{(\mathcal{M})}+1$.
Thus we obtain a separating invariant of dimension $O(d\cdot n)$ rather than $O(n^{2d})$. Yet, we still had to calculate all of the $O(n^{2d})$ polynomial entries. Therefore, computationally speaking, this approach did not yield much.

To remedy this, \cite{dym2023low} proposed alternative invariants based on  $\RR^n$  permutation invariants $\Psi$. Such a $\Psi$. and any choice of a vector $a\in \RR^d $, induces a permutation invariant on $\RR^{d \times n}$ of the form 
$$\RR^{d\times n}\ni (x_1,\ldots,x_n) \mapsto  \Psi(a^Tx_1,\ldots,a^Tx_n)  .$$
This technique of producing high dimensional invariants from low dimensional ones is known as \emph{polarization}. To obtain invariants that are also \emph{separating}, we need (i) to choose $\Psi$ to be invariant and separating on $\RR^n$. Two examples of such functions, are the functions $\Psi_{sort}$ and $\Psi_{pow}$ which we defined in \eqref{eq:agg}:
\begin{equation*}
\Psi_{sort}( x_1,\ldots,x_n  )=\mathrm{sort}(  x_1,\ldots,x_n  ) \quad  \end{equation*}
and
\begin{equation*}
    \quad \Psi_{pow}( x_1,\ldots,x_n )=\left(\sum_{i=1}^n x_i^t \right)_{t=1}^n.
\end{equation*}
Additionally, we need (ii) to choose not a single polarization function defined by a single $a$, but rather $2nd+1$ random vectors $a_1,\ldots,a_{2nd+1}$. More precisely, \cite{dym2023low} showed that for Lebesgue almost every $a_1,\ldots,a_{2nd+1}\in \RR^d $ and $b_1,\ldots,b_{2nd+1}\in \RR^n $the function
\begin{equation}\label{eq:embed2}
\embed_\theta( x_1,\ldots,x_n )=\langle b_j,\; \Psi\left(   a_j^Tx_1 \ldots,  a_j^Tx_n  \right) \rangle 
\end{equation}
where $, \; j=1,\ldots,2nd+1$, defined in \eqref{def:embed} is permutation invariant and separating. The role of the projection by the $b_j$ is to reduce the embedding dimension to $2nd+1$ rather than the $(2nd+1)n$ dimension we would get if these projections were not applied.

We note that the complexity of a single invariant in \eqref{eq:embed2} would be $O(n\log(n))$ (assuming $n>d$) when using sorting or $O(n^2) $ when using power sum polynomials. Accordingly, the complexity of computing $2nd+1$ invariants would be $O(dn^2\log(n)) $ using sorting or $O(dn^3) $ using power sum polynomials. 

Finally, we note that (as mentioned in the main text), if we're interested in separation only on a semi-algebraic permutation invariant subset  $\X \subseteq \RR^{d\times n}$ with dimension $D_\X$, then the number of separating invariants needed in \eqref{eq:embed2} would be $2D_\X+1 $ rather than $2nd+1 $.

\section{Extensions}\label{app:dim}
In the main text, we described Vanilla $k$-WL tests which are well-defined for all $k$ and $d$, and the $2$-SEWL test which is well-defined for the case $d=3$ (since vector products are used). We now explain how to define a $(d-1) $ SEWL test for general $d$, and then explain how these tests can be easily modified to give a $(d-1) $-EWL test with similar complexity, which is $\ESthree $ invariant and separating rather than $\SESthree $ invariant and separating. 
\subsection{ SEWL for General Dimension $d$ }
The complete 2-SEWL test for $d=3$ point clouds can be generalized to a $(d-1)$-SEWL test for general $d$-dimensional point clouds by a  generalization of the cross-product operator. This generalization is formally known as the Hodge dual operator \cite{jost2017riemannian}. 

For fixed $x_{i_1},\ldots, x_{i_{d-i}} \in \mathbb{R}^d$, we define  the following linear functional 
$$
    f: \mathbb{R}^d \to \mathbb{R}$$
    $$f(x) = \det{(x, x_{i_{1}}, \ldots, x_{i_{d-1}} )}$$
    By Riesz's Representation Theorem \cite{bachman2000functional},  every  linear functional on $\mathbb{R}^d$ is essentially a function in the form of a dot product against some (unique) vector in $  \mathbb{R}^d$. 
 Thus, there exists some vector, denoted by $x^{\star}=x^{\star}(x_{i_1}, \ldots, x_{i_{d-1}}) \in \mathbb{R}^d$, such that 
 \begin{equation}\label{eqn:result}
     \det{(x, x_{i_{1}}, \ldots, x_{i_{d-1}} )}=f(x) = \langle x, x^{\star} \rangle.
 \end{equation}
 We see directly from the definition that $x^{\star}$ is orthogonal to $x_{i_1}, \ldots, x_{i_{d-1}} $ and is non-zero if and only if these $d-1$ vectors are linearly independent. Moreover, this choice of vector is $SO(d)$ equivariant, which means that for any $x_{i_1},\ldots, x_{i_{d-i}} \in \mathbb{R}^d$ and $R \in SO(d)$ we have 
 \begin{align}\label{eqn:orientcrosprod}
    x^{\star}(Rx_{i_1}, \ldots, Rx_{i_{d-1}}) = Rx^{\star}(x_{i_1}, \ldots, x_{i_{d-1}}).
\end{align}



    
This is because for any $x\in \RR^d$ we have 
\begin{align*}
  \det{(x, Rx_{i_{1}}, \ldots, Rx_{i_{d-1}} )}\\
  =\det{(RR^Tx, Rx_{i_{1}}, \ldots, Rx_{i_{d-1}} )}\\
  = \det(R)\cdot \det{(R^Tx, x_{i_{1}}, \ldots, x_{i_{d-1}} )}\\
  = \langle R^Tx, x^{\star} \rangle\\
  = \langle x, Rx^{\star} \rangle
\end{align*}
Finally, we note that the coordinates of $x^*$ can be calculated by inserting the unit vectors $e_1,\ldots,e_d $ into \eqref{eqn:result}. That is
$$
\lb x^{\star}\rb_{j} = \det{\lb e_j, x_{i_1},\ldots, x_{i_{d-1}}  \rb}
$$
where $\lb x^{\star}\rb_{j}$ is the $j$-th entry of $x^{\star}$.
This requires computing $d$ different determinants of $d\times d$ matrices, and so the total complexity of computing $x^*$ is $d^4 $. 
\paragraph{The $(d-1)$-SEWL test} We have shown an extension of the definition of the cross-product that respects orientation, thus now we can naturally define a $(d-1)$-SEWL test which will be $\SES$ invariant and separating in $d$ dimensions (generalizing the $2$-SEWL test for $d=3$).

We define for each $(d-1)$-tuple $\mathbf{i} \in [n]^{d-1}$ an initial coloring $\mathbf{C}_{(0)}(\I)=\mathbf{C}_{(0)}(\I)(X) $ corresponding to the $(d-1)\times (d-1)$ Gram matrix of the points $x_{i_1},\ldots,x_{i_{d-1}}$. We denote $x^*(x_{i_1},\ldots,x_{i_{d-1}}) $ by $x^*(\mathbf{i})$.  The coloring is then refined via
    $$
\mathbf{C_{(\mathbf{1})}}(\I) = \embed^{(0)} (  \mathbf{C_{(\mathbf{0})}}(\I), $$ $$\lms \left( \mathbf{C}_{(\mathbf{0})}(\I [k \setminus 1]), \ldots, \mathbf{C}_{(\mathbf{0})}(\I [k \setminus (d-1)])  ,   \langle x^{\star}(\mathbf{i}), x_k \rangle  \right) \rms_{k}) 
$$

where $\mathbf{i}[j \setminus t]$ is the multi-index $\mathbf{i}$ with its $t$-th coordinate replaced by $j$; e.g. for $t=1$, $\mathbf{i}[j\setminus 1] = (j,i_2,\ldots, i_k)$. Then a final global coloring is obtained from
$$\mathbf{C}_\G=\embed^{(1)}\left( \lms \mathbf{C}_{(1)}(\I)| \, \I \in [n]^{d-1}   \rms \right) $$

The $(d-1)$-SEWL test can be shown to be $\SES$ complete, using the same arguments used in the proof of Theorem~\ref{thm:2sewl}. 


\subsection{ $(d-1)$-EWL for general dimension $d$ }
We now return to the case where reflections are also considered symmetries, and we're looking for complete tests with respect to the group $\ES$. The Vanilla-$d$-WL test will be $\ES$ complete. However, a more efficient test can be obtained by tweaking the $(d-1)$-SEWL test which is not reflection-invariant, to attain a reflection invariant  $\ES$ complete test. 

This tweaking is  obtained as follows. We fix some reflection $R_0$ (a reflection is an orthogonal matrix with a negative determinant).  We define the $(d-1)$-EWL test for a given $X\in \RR^{d\times n} $ by applying the $(d-1)$-SEWL test to both $X$ and $R_0X $ to obtain $\mathbf{C}_\G(X)$ and $\mathbf{C}_\G(R_0X)$, and then computing a final global feature via
\begin{equation}\label{eqn:readoutref}
    C^{\mathit{ref}}_\G(X) = \embed^{(2)} \lms \mathbf{C}_\G(X), \mathbf{C}_\G(R_0X)  \rms
    \end{equation}
In the following theorem, we show how the completeness of the $(d-1)$-SEWL test implies the completeness of the $(d-1)$-EWL test.
\begin{theorem}
For every $X,Y\in \RR^{d\times n}$, a single iteration of the $(d-1)$-EWL test assigns $X$ and $Y$ the same value if and only if $X\underset{\ES}{=}Y $. 
\end{theorem}
\begin{proof}

\textbf{Invariance:}
We prove that for every $R\in O(d)$ and permutation matrix $P\in S_n $ we have that $C^{\mathit{ref}}_\G(RXP)=C^{\mathit{ref}}_\G(X) $. We can divide into two cases: 
If $R\in SO(d) $ then by the  $\SES $ invariance of $\mathbf{C}_\G(X) $ we have that 
\begin{align*}
\mathbf{C}_\G(RXP)&=\mathbf{C}_\G(X)  \\
\mathbf{C}_\G(R_0RXP)&=\mathbf{C}_\G((R_0RR_0^T)R_0XP)=\mathbf{C}_\G(R_0X) 
\end{align*}

On the other hand, if $R$ is a reflection, then 
\begin{align*}
\mathbf{C}_\G(RXP)&=\mathbf{C}_\G((RR_0^T)R_0XP)=\mathbf{C}_\G(R_0X)\\
\mathbf{C}_\G(R_0RXP)&=\mathbf{C}_\G(X) 
\end{align*}
and so in both cases, we obtain that
$$  C^{\mathit{ref}}_\G(RXP) = \embed^{(2)} \lms \mathbf{C}_\G(RXP), \mathbf{C}_\G(R_0RXP)  \rms$$ 

$$= \embed^{(2)} \lms \mathbf{C}_\G(X), \mathbf{C}_\G(R_0X)  \rms= C^{\mathit{ref}}_\G(X) $$




\textbf{Completeness:}
We prove that if $X,Y\in \RR^{d \times n}$ and $C^{\mathit{ref}}_\G(X)=C^{\mathit{ref}}_\G(Y) $  then $X$ and $Y$ are related by a permutation and orthogonal transformation. 

Since $C^{\mathit{ref}}_\G(X)=C^{\mathit{ref}}_\G(Y) $ it follows that either $C_\G(X)=C_\G(Y) $ or $C_\G(X)=C_\G(R_0Y) $ . The completeness of the $(d-1)$-SEWL test (Theorem~\ref{thm:2sewl}) then implies that $X $ is related to either $Y$ or $R_0Y $ by an $\SES$ transformation. In either case, this implies that $X$ and $Y$ are related by an $\ES$ transformation. 
\end{proof}

\subsection{Continuous Implementation and Computational Complexity}
In Section WL-equivalent GNNs with continuous features, we showed how the $2$-SEWL test can be realized by a continuous piecewise differentiable architecture that uses sort-based multi-set injective functions. The complexity of this construction was $O\left(n^{4}\log(n) \right) $. Similarly, the $(d-1)$-SEWL and  $(d-1)$-EWL tests can be computed with complexity of $O\left(n^{d+1}\log(n) \right) $ (in the scenario where $d$ stays constant and $n\to \infty$) . The leading order of the computation complexity stems from computing the $n^{d-1}$ colorings $\lb\mathbf{C_{(1)}(i)}, \; \I \in [n]^{d-1}\rb$ and embedding the multiset derived by aggregating over the `neighbors' of each tuple, each one of those requires $\O(n^2\log(n))$ operations. 

\end{document}